\def\BibTeX{{\rm B\kern-.05em{\sc i\kern-.025em b}\kern-.08em
    T\kern-.1667em\lower.7ex\hbox{E}\kern-.125emX}}
\newtheorem{prop}{Proposition}
\begin{document}
\title{Robotic Mobility Diversity Algorithm with Continuous Search Space\\
}
\title{Robotic Mobility Diversity Algorithm with Continuous Search Space\\
}

\author{\IEEEauthorblockN{Daniel Bonilla Licea$^{*}$, Des McLernon{$^{\ddagger}$}, Mounir Ghogho$^{{\ddagger},\ast}$, Edmond Nurellari{$^{\dagger}$}, and Syed Ali Raza Zaidi$^{\ddagger}$}
\IEEEauthorblockA{$*${International University of Rabat, TICLab, Morocco} \\
{$\dagger$ School of Engineering, University of Lincoln, UK }\\
{$\ddagger$ School of Electronic and Electrical Engineering, University of Leeds, UK }\\
daniel.bonilla-licea@uir.ac.ma, d.c.McLernon@leeds.ac.uk, m.ghogho@ieee.org, enurellari@lincoln.ac.uk, elsarz@leeds.ac.uk}
}

\maketitle

\begin{abstract}
Small scale fading makes the wireless channel gain vary significantly over small distances and in the context of classical communication systems it can be detrimental to performance. But in the context of mobile robot (MR) wireless communications, we can take advantage of the fading using a mobility diversity algorithm (MDA) to deliberately locate the MR at a point where the channel gain is high. There are two classes of MDAs. In the first class, the MR explores various points, stops at each one to collect channel measurements and then locates the best position to establish communications. In the second class the MR moves, without stopping, along a continuous path while collecting channel measurements and then stops at the end of the path. It determines the best point to establish communications. Until now, the shape of the continuous path for such MDAs has been arbitrarily selected and currently there is no method to optimize it. In this paper, we propose a method to optimize such a path. Simulation results show that such optimized paths provide the MDAs with an increased performance, enabling them to experience higher channel gains while using less mechanical energy for the MR motion.
\end{abstract}

\begin{IEEEkeywords}
Rayleigh fading, correlated channels, spatial statistics, diversity, robotics.
\end{IEEEkeywords}

\section{Introduction}
\label{sec:intro}
Small-scale fading can degrade wireless communications links and therefore diversity techniques must be used for compensation. Now, a new class of diversity, referred to as mobility diversity, has recently started to be developed in the context of robotic communications. Algorithms implementing this type of diversity are called mobility diversity algorithms (MDAs) \cite{r10}, \cite{r11}.

MDAs operate by making the MR measure the channel gain over an exploration space in its close vicinity and then, after having gathered channel gain measurements, it determines the optimum position from which to establish communications. The exploration space can be either continuous or discrete.

Discrete exploration spaces consists of a finite number of stopping points. In this case, the MR moves from one point to the next one while stopping at each point. We will refer to such MDAs as stopping points based MDAs. Different configurations for such points have been proposed arbitrarily (e.g., \cite{r11}, \cite{r12}) but without any concrete theory behind their choice. So, in \cite{r9} we have considered such a problem and  presented a method to design in a systematic way the spatial configuration for any number of stopping points.

Now, continuous exploration spaces consist of a continuous path. Here the MR moves along a continuous path without stoppping until its end while collecting wireless channel samples. We will refer to MDAs using this type of path as ``continuous MDAs'' (CMDAs). In \cite{r0} the authors implemented experimentally an MDA with a circular exploration path. In \cite{r13} the authors implemented experimentally a continuous path MDA where the MR samples only at positions along the path that produce statistically independent channels measurements. The authors explored a circular path and also a path that produces samples arranged into a hexagonal lattice contained inside a circle. In \cite{r6} the authors implemented experimentally a continuous MDA with linear, circular, spiral and random paths. Then in \cite{r8} we proposed a continuous MDA with a linear path where the length was optimized. 

As can be seen, there is not still a general method to design the exploration space for CMDAs. The interest behind the development of such a method is that it can improve the performance of CMDAs which can be more energy efficient than the stopping point based MDAs. This is because when executing a CMDA, the MR moves continuously without stopping along the exploration path and thus taking advantage of the momentum and so reducing the energy spent in motion.

In this paper, we consider the problem of a MR equipped with a single antenna required to establish communications, in a static environment, with a stationary node (also equipped with a single antenna) through a wireless channel experiencing small-scale fading. To take advantage of the fading, we design a CMDA. As stated above, there does not yet exist a theory that optimises the actual shape of the exploration path concerning CMDAs. So, in this paper, for the first time we propose a solution to such a problem.  

Section \ref{sec:Model} presents the model for the wireless channel. Then in section \ref{sec:CSSMDA}, we develop the CMDA and in section \ref{sec:Simulations} we show the performance of the CMDA under different conditions. Finally in section \ref{sec:Conclusions}, we present our conclusions.

\section{System Model}
\label{sec:Model}
\subsection{Wireless Channel Model}
\label{sec:Model:channel}
We assume that there is no line of sight between the stationary node\footnote{The stationary node can be a base station or another MR which remains stationary during the CMDA execution.} and the MR; that the signal transmitted by the stationary node to the MR is narrowband (i.e., the bandwidth of the signal is significantly smaller than the radio frequency carrier frequency used in the transmission); that the MR's environment is stationary (i.e., it does not change with time during the execution of the MDA) and presents a large number of scatterers (which produce the small-scale fading\footnote{Note that small-scale fading is usually modelled as a random process in the communications literature \cite{M05}.}). Consequently, the wireless channel is time invariant (at a given position) and exhibits Rayleigh\footnote{Rayleigh fading means that the amplitude of the fading in the wireless channel is Rayleigh distributed.} flat (i.e., frequency independent) fading \cite{M05}. Thus, the signal received by the MR at time instant $t$, when located at $\mathbf{p}(t)$, is:
\begin{equation}
\label{eq:1.2.1}
z(t)=h(\mathbf{p}(t))\times w(t)+n(t),
\end{equation}
where $w(t)$ is the narrowband signal transmitted by the stationary node, $n(t)\sim\mathcal{CN}(0,\sigma_n^2)$ is the additive white Gaussian noise generated at the MR's receiver and $h(\mathbf{p}(t))$ represents the small-scale fading. We will assume Jakes' model \cite{J94} for the small scale fading term. This is a probabilistic model which consists of $h(\mathbf{p}(t))\sim \mathcal{CN}\left(0,1\right)$ and the spatial normalized correlation function given by:
\begin{equation}
\label{eq:1.2.2}
r(\mathbf{p},\mathbf{q}) =  \mathbb{E}\left[h(\mathbf{p})h^*(\mathbf{q})\right]=J_0\left({2\pi\|\mathbf{p}-\mathbf{q}\|_2}/{\lambda}\right),
\end{equation}
where $J_0(\cdot)$ is the Bessel function of zeroth order and first kind,  $\lambda$ is the wavelength used in the RF transmission and $\mathbf{p},\mathbf{q}\in\mathbb{R}^2$  are any two points in the 2-D space.

\section{Continuous Path Mobility Diversity Algorithm}
\label{sec:CSSMDA}
The CMDA is a class of MDA in which the MR explores without stopping a continuous path $\mathcal{P}$ of desired length $L_p$. To do so, the stationary node which communicates with the MR initially transmits a pure tone\footnote{Thus the (lowpass, complex equivalent) signal transmitted by the stationary node $w(t)=K$ (in (\ref{eq:1.2.1})) where $K$ is the amplitude of the received tone. } that allows the MR to estimate the channel gain along the path being explored. We will refer to this period of time as the exploration phase. During this phase the MR follows a continuous path $\mathcal{P}$ while estimating the wireless channel at sampling points\footnote{We define $\mathcal{S}\subseteq\mathcal{P}$ as the set of all sampling points.} uniformly distributed all along the path. Then, once the MR reaches the end of the path it determines the sampling point with highest channel gain $\mathbf{q}_{opt}\in\mathcal{S}$. Finally, the MR moves to $\mathbf{q}_{opt}$ to establish communications with the stationary node. This second part will be referred to as the positioning phase. Once the exploration phase finishes the stationary node acts as a receiver and waits for a reply from the MR to establish communication. In the next subsection we will show how to optimize the shape of the continuous exploration path $\mathcal{P}$ for the CMDA.
\subsection{Optimum Exploration Path}
\label{sec:CSSMDA:path}
In this section, we optimize the continuous path $\mathcal{P}$ for a given desired length $L_p$. Since the objective of the CMDA is to enable the MR to obtain a high channel gain at $\mathbf{q}_{opt}$, then $\mathcal{P}$ must be optimised to maximise $\mathbb{E}[|h(\mathbf{q}_{opt})|]$ which depends on the set of sampling points $\mathcal{S}$ which in turn depends on the shape and the length of the continuous path $\mathcal{P}$ as well as on the spatial sampling rate used during the exploration phase.

Although an analytical expression for $\mathbb{E}[|h(\mathbf{q}_{opt})|]$ cannot be obtained as a function of the continuous path $\mathcal{P}$, we can still optimize the path after we discretize it. To do this, we define the set of path points $\mathcal{D}_N=\left\{\mathbf{d}_j\right\}_{j=1}^N$ (not to be confused with the set of sampling points $\mathcal{S}$) associated with the following constraint:
\begin{equation}
\label{eq:2.1.1}
\|\mathbf{d}_j-\mathbf{d}_{j+1}\|_2={L_p}/(N-1),\ \ j=1,2,\cdots,N-1.
\end{equation}
Note that, for any given continuous path $\mathcal{P}$ with length $L_p$ and a sufficiently large value $N$, there exists a set $\mathcal{D}_N$ with the constraint (\ref{eq:2.1.1}) such that $\mathcal{I}\left\{\mathcal{D}_N\right\} \approx \mathcal{P}$. Here $\mathcal{I}\left\{\mathcal{D}_N\right\}$ is an interpolation of the set of points $\mathcal{D}_N$. Therefore, the set of points $\mathcal{D}_N$ can be thought of as the discretized version of the continuous path $\mathcal{P}$ which is obtained by  interpolating of the points of that set. So, to optimize the continuous path $\mathcal{P}$ we optimize the set of path points\footnote{Note that we do not use the sampling points $\mathcal{S}$ for this optimization in order to decouple this process from the spatial sampling rate.} $\mathcal{D}_N$.

The set $\mathcal{D}_N$ is optimized to maximize $\mathbb{E}[|h(\mathbf{q}_{opt})|]$ with $\mathbf{q}_{opt}\in\mathcal{D}_N$. Mathematically this is similar to the optimization of the stopping points configuration which was solved in \cite{r9} by minimizing the amount of correlation among the wireless channels at the different stopping points. So, we use the same approach here to optimize the path points:
\begin{equation}
\label{eq:2.1.2}
    \begin{array}{l}
\displaystyle\min_{\psi_1,\psi_2,\cdots,\psi_{N-1}} \sum_{m=1}^N\sum_{n=1}^NJ_0^2\left(\frac{2\pi\|\mathbf{d}_m-\mathbf{d}_n\|_2}{\lambda}\right)\\
{\rm s.t.}\\
\|\mathbf{d}_{j+1}-\mathbf{d}_{j}\|_2=\frac{L_p}{N-1},\ \ j=1,2,\cdots,N-1,\\
\measuredangle\{\mathbf{d}_{j+1}-\mathbf{d}_{j}\}=\psi_j\in[0,2\pi),\ \ j=1,2,\cdots,N-1.
\end{array}
\end{equation}
The first constraint in (\ref{eq:2.1.2}) comes from (\ref{eq:2.1.1}) and the angles $\psi_j$ are the only independent variables\footnote{Since only the shape of the path matters, the angle $\psi_1$ can take any value without eliminating any solution in (\ref{eq:2.1.2}).} which define the shape of $\mathcal{D}_N$. Now, since the optimum continuous path $\mathcal{P}$ will be derived by interpolation from the $\mathcal{D}_N$ which is optimized to minimize the correlation among the channels at $\mathcal{D}_N$, then we will refer to $\mathcal{P}$ as the minimum correlation path (MCP).
\begin{prop}
\label{prop1}
For $L_p/\lambda> z_0$, where $z_0$ is the smallest value of $z$ that satisfies $J_0^2(2\pi z)=0$, the MCP is the straight line path.
\end{prop}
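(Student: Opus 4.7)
The plan is to show that the straight-line configuration simultaneously minimises every pairwise correlation term in \eqref{eq:2.1.2}, by combining the triangle inequality with the behaviour of $J_0^2$. By the triangle inequality, any $\mathcal{D}_N$ satisfying the step constraint $\|\mathbf{d}_{j+1}-\mathbf{d}_j\|_2 = \delta := L_p/(N-1)$ obeys $\|\mathbf{d}_m-\mathbf{d}_n\|_2 \le |m-n|\delta$ for every pair, with equality throughout iff the points are collinear and monotonically ordered. Thus the straight line is the unique configuration that saturates the upper envelope of every pairwise distance at once. In parallel, the function $g(d) := J_0^2(2\pi d/\lambda)$ decreases strictly on $[0, z_0\lambda]$ from $1$ to $0$, and past $z_0\lambda$ it oscillates with envelope $O(1/d)$, so its subsequent local maxima are dominated by the values taken on the monotone interval.

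For the straight-line chain the objective evaluates to
\begin{equation*}
F_{\mathrm{line}} = N + 2\sum_{k=1}^{N-1}(N-k)\,J_0^2\!\left(2\pi k\delta/\lambda\right),
\end{equation*}
and I would compare any candidate chain to $F_{\mathrm{line}}$ pair-by-pair. For each pair $(m,n)$ with $|m-n|\delta\le z_0\lambda$, the bound $\|\mathbf{d}_m-\mathbf{d}_n\|_2 \le |m-n|\delta$ places both the actual and the straight-line distance inside the monotone interval, so $g(\|\mathbf{d}_m-\mathbf{d}_n\|_2)\ge g(|m-n|\delta)$, with strict inequality whenever the subchain from $\mathbf{d}_m$ to $\mathbf{d}_n$ is not straight. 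Aggregated over all such short-index pairs (in particular the pair $(j-1,j+1)$ triggered by any interior bend at index $j$), this yields a strictly positive \emph{short-range} excess for any non-straight chain.

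The principal obstacle is the long-index pairs with $|m-n|\delta>z_0\lambda$, for which $g$ is non-monotone: shortening such a pair could push $g$ towards a subsequent zero and thus \emph{reward} bending. The hypothesis $L_p/\lambda>z_0$ is precisely what makes this regime non-empty, so it must be controlled. My plan is to use the envelope bound on $g$ past its first zero, which caps any such reward by $O(\lambda/d)$, and then show that any bend large enough to materially shorten a long-index pair inevitably shortens a comparable number of short-index pairs whose monotone losses, accumulated by the previous step, dominate the reward. A purely local perturbation analysis around the straight line would establish only that it is a critical point with positive semi-definite Hessian; upgrading this to the global statement of the proposition appears to require a Schur-type or majorisation argument on the distribution of pairwise shortenings, combined with the compactness of the parameter space $[0,2\pi)^{N-1}$.
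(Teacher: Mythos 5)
Your short-range argument---the triangle inequality giving $\|\mathbf{d}_m-\mathbf{d}_n\|_2\le|m-n|\delta$ with equality for every pair exactly when the chain is straight, combined with the strict decrease of $g(d)=J_0^2(2\pi d/\lambda)$ on $[0,z_0\lambda]$---is precisely the paper's proof, written out more explicitly. But you have applied it under the wrong hypothesis, and the ``principal obstacle'' you identify is not an obstacle to be overcome: it is evidence that the inequality in the proposition is a typo. The condition should read $L_p/\lambda\le z_0$, not $L_p/\lambda> z_0$. Under $L_p/\lambda\le z_0$ every pairwise distance satisfies $\|\mathbf{d}_m-\mathbf{d}_n\|_2\le|m-n|\delta\le(N-1)\delta=L_p\le z_0\lambda$, so \emph{all} pairs lie in the monotone interval, your long-range regime is empty, and the pair-by-pair comparison closes the proof immediately. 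That this is the intended reading is clear from the paper's own proof (every line of which assumes $L_p/\lambda\le z_0$, contradicting the displayed hypothesis), from the sentence immediately after it stating that for $L_p/\lambda>z_0$ the cost in \eqref{eq:2.1.2} becomes non-convex with multiple minima and is attacked numerically by simulated annealing, and from Fig.~\ref{Figure1}, where the optimised paths visibly bend once $L_p$ is large enough.

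Consequently, the programme you outline for the long-index pairs---envelope bounds of order $\lambda/d$ past the first zero, a majorisation argument on the distribution of pairwise shortenings, compactness of $[0,2\pi)^{N-1}$---is a genuine gap that cannot be filled, because the statement it is meant to establish is false in that regime: the paper's own numerical results show that the minimum-correlation path ceases to be straight for sufficiently large $L_p/\lambda>z_0$ (e.g.\ the curved optima at $L_p=1.8\lambda$ in Fig.~\ref{Figure1}), since once some pairs sit beyond the first zero of $J_0$, bending can park them near a zero of the correlation at the cost of only small monotone losses elsewhere. Your local-perturbation remark is also too optimistic for the same reason. The fix is simply to correct the hypothesis to $L_p/\lambda\le z_0$; your first two paragraphs, with the short-range case now exhaustive, then constitute a complete proof that matches (and slightly tightens) the one in the paper, including the uniqueness observation that equality in the triangle inequality forces collinearity.
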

\begin{proof}
The function $J_0^2(2\pi x)$ is decreasing for $x\leq z_0$. Then, if $L_p/\lambda\leq z_0$ the norm $\|\mathbf{C}_N\|_F^2$ is minimized when $\|\mathbf{d}_m-\mathbf{d}_n\|_2$ is maximized for all $n,m$. This occurs when $\psi_j=\psi_1$ for all $j=1,2\cdots N-1$. Hence, when $L_p/\lambda\leq z_0$, (\ref{eq:2.1.2}) is solved when all the path points $\mathcal{D}_N$ lie on a straight line.\\
\end{proof}

When $L_p/\lambda> z_0$ the cost function in (\ref{eq:2.1.2}) becomes non convex and presents multiple minima. But since the optimization variables are bounded angles we can solve (\ref{eq:2.1.2}) numerically using simulated annealing \cite{b6}.

The interpolation of the path points $\mathcal{D}_N$ can be implemented using splines \cite{b5} to obtain the continuous exploration path $\mathcal{P}$. We want the MR to be able to move through the continuous exploration path without stopping until it reaches the end of the path. This is because moving continuously without stopping along the path is more energy efficient than stopping various times along the path. To achieve this, the continuous path $\mathcal{P}$ should be significantly smooth. So we use second-order\footnote{Second-order splines are sufficient to allow the MR to traverse the continuous path $\mathcal{P}$ with continuous velocity and without needing to stop due to abrupt direction changes.} splines to perform the interpolation of the path points $\mathcal{D}_N$ to obtain $\mathcal{P}$. The parametrized function for the continuous path $\mathcal{P}$ is then obtained by:
\begin{equation}
\label{eq:2.1.3}
\mathbf{g}(s)=\bigg\{
\begin{array}{l}
\Pi_j(s-j),\ \forall s\in[j,j+1),\ j=1,2,\dots,N-1\\
\Pi_{N-1}(1),\ {\rm otherwise}
\end{array}
\end{equation}
where $s\in[0,N-1]$ is a free parameter and $\Pi_j(s)$ is a two-dimensional second-order polynomial vector satisfying the following restrictions:
\begin{eqnarray}
\label{eq:2.1.4a}
\Pi_j(1)=\Pi_{j+1}(0)=\mathbf{d}_{j+1},\ \ j=1,2,\cdots,N-2\\
\label{eq:2.1.4b}
\frac{{\rm d}\Pi_j(s)}{{\rm d}s}\bigg|_{s=1}=\frac{{\rm d}\Pi_{j+1}(s)}{{\rm d}s}\bigg|_{s=0},\ \ \ j=1,2,\cdots,N-2\\
\label{eq:2.1.4c}
\Pi_1(0)=\mathbf{d}_1,\ \ \ \Pi_{N-1}(1)=\mathbf{d}_N.
\end{eqnarray}
The restriction (\ref{eq:2.1.4a}) ensures the continuity of the exploration path; restriction (\ref{eq:2.1.4b}) ensures the smoothness of the exploration path $\mathcal{P}$; and (\ref{eq:2.1.4c}) ensures that the starting and ending points of the exploration path $\mathcal{P}$ are $\mathbf{d}_1$ and $\mathbf{d}_N$ respectively.

\begin{figure}[htp!]           
\centerline{{\includegraphics[clip, trim ={5mm 2mm 3mm 1mm},width=90mm ,height=80mm]{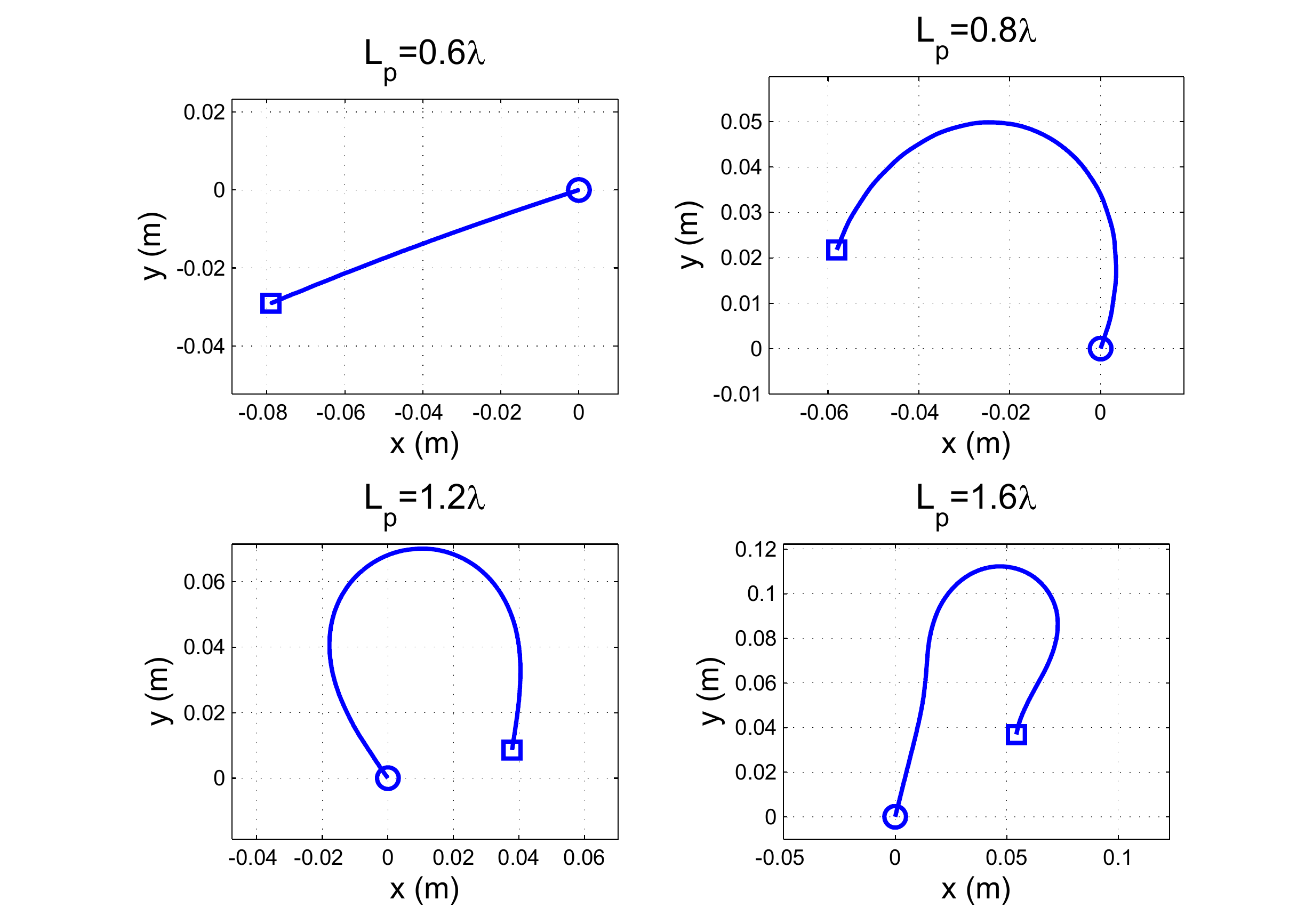}}}
    \caption[Global probability of detection for different P_t values]{\label{Figure1}
   Optimized continuous path $\mathcal{P}$ for $N=25$, $\lambda=14.02$ cm and different values of $L_p$. The start point $\mathbf{d}_1$ and the end point $\mathbf{d}_N$ are represented with a circle and a square respectively.}
\end{figure}

Now, $L_p$ (see (\ref{eq:2.1.1})) is the desired length while $L_p'$ is the actual length of the continuous exploration path $\mathcal{P}$ and is given by $L_p'=\int_{0}^{N-1}\left\|\frac{{\rm d}\mathbf{g}(s)}{{\rm d}s}\right\|_2{\rm d}s$. As a consequence, with exception of the linear path, $L_p'$ is slightly greater than $L_p$.

In Fig. \ref{Figure1} we observe how the shape of the MCP changes as $L_p$ increases. We would like to make it clear that as $L_p$ increases, solving (\ref{eq:2.1.2}) becomes increasingly difficult since the number of local minima increases. So, in this paper we have solved (\ref{eq:2.1.2}) only up to $L_p=1.8\lambda$.

Finally, regarding the exploration sense of the path $\mathcal{P}$ (i.e., which ends of the path constitute the starting and the ending points) it only affects the average distance traveled during the positioning phase (i.e., the average distance from the ending point to $\mathbf{q}_{opt}$). To reduce this distance, we choose as ending point the point from $\{\mathbf{d}_1, \mathbf{d}_N\}$ that minimizes its average distance to the path points $\mathcal{D}_N$.

\subsection{$\mathbf{q}_{opt}$ Determination and Channel Estimation}
\label{sec:CSSMDA:estimation}
The main advantage of the CMDA with respect to stopping points based MDAs is that it takes advantage of the MR inertia (by stopping only at the end of the path) to reduce the energy spent in motion. But as a consequence, it can only take a single wireless channel measurement per sampling point and thus it is more vulnerable to noise compared to the stopping points based MDAs. To compensate for this, when the exploration phase is over, the noisy signals received during the exploration phase must be smoothed to obtain good wireless channel estimates to determine $\mathbf{q}_{opt}$. Hence, the channel estimation is done using the following smoother:
\begin{equation}
\label{eq:2.3.3}
\hat{h}_s(\mathbf{p}(t_s(k)),\mathcal{S}_k(d))=\sum_{j\in\mathcal{S}_k(d)}a_{k,j}z(t_s(j))
\end{equation}
where $t_s(k)$ is the $k$th sampling instant\footnote{Sampling instants are chosen so that the spatial samples are uniformly distributed along the path $\mathcal{P}$ and so the temporal sampling rate is not uniform in general.}; $\mathcal{S}_k(d)$ is the set of all integers $j$ that satisfy $\|\mathbf{p}(t_s(k))-\mathbf{p}(t_s(j))\|\leq d$; and $\hat{h}_s(\mathbf{p}(t_s(k)),\mathcal{S}_k(d))$ is the estimate for $h(\mathbf{p}(t_s(k)))$ using the measurements collected during the sampling instants $\{t_s(j)\}_{j\in\mathcal{S}_k(d)}$.

By using the appropriate value of $d$ in $\mathcal{S}_k(d)$, the MR will only use channel measurements that are highly correlated with $h(\mathbf{p}(t_s(k)))$ and neglect the rest of those with low correlation, thus reducing the computational load of the estimation process.

Finally, the coefficients $a_{k,j}$ of the smoother are optimized such that the mean square error of $\hat{h}_s(\mathbf{p}(t_s(k)),\mathcal{S}_k(d))$ is minimized. Once the MR has estimated the wireless channels at all sampling points, it selects $\mathbf{q}_{opt}$ as the optimum sampling point (i.e., with the highest estimated channel gain). This concludes the development of the CMDA.

\section{Simulations}
\label{sec:Simulations}
In this section, we present simulation results to gain more insight into the CMDA.  For the simulations setup, we select $\lambda=14.02$ cm corresponding to an RF of 2.14 GHz which is a common frequency for mobile wireless communications. For comparison purposes, we consider the same MR model with the same parameters as used in \cite{r9}.

First, we would like to highlight the benefits of executing the CMDA with the MCP derived through the method developed in section \ref{sec:CSSMDA:path}. To do this, we select a desired distance\footnote{The number of sampling points is $M=\left\lceil \frac{L_p'}{\Delta}\right\rceil+1$ where $\Delta$ is the desired distance.} between sampling points of $\Delta=0.05\lambda$. It was shown in \cite{r8} that this is sufficient to obtain the maximum channel power from each path under noiseless conditions. For the design of the paths we used sets of $N=25$ point paths.

Fig. \ref{Figure2} shows the performance\footnote{Under noiseless conditions to isolate the effect of the selected path in the performance of the CMDA.} of the CMDA when the MCP (developed in this paper) is used compared to the CLDA with a linear path (LP) and a circular path (CP). Clearly, for $L_p'\leq 0.6\lambda$, the MCP takes the form of the LP. For higher values of $L_p'$, the CMDA using the MCP outperforms the CMDA using LP both in terms of channel power obtained and in terms of the mechanical energy consumed. Although the performance improvement by using MCP instead of LP is small in terms of channel power gain, we note that in terms of mechanical energy the advantage of using the MCP is significant. This is because, due to its geometrical properties, the LP is the path that maximizes the average distance between its end point and the rest of the sampling points. By comparison, the MCP requires the MR to shorten distances during the positioning phase due to its curvatures, see Fig. \ref{Figure1}.
\begin{figure}[!t]                    
\centerline{{\label{fig:topology2}\hspace{0.005cm}{{\includegraphics[clip, trim ={8mm 2mm 22mm 0mm},width=46mm ,height=48mm]{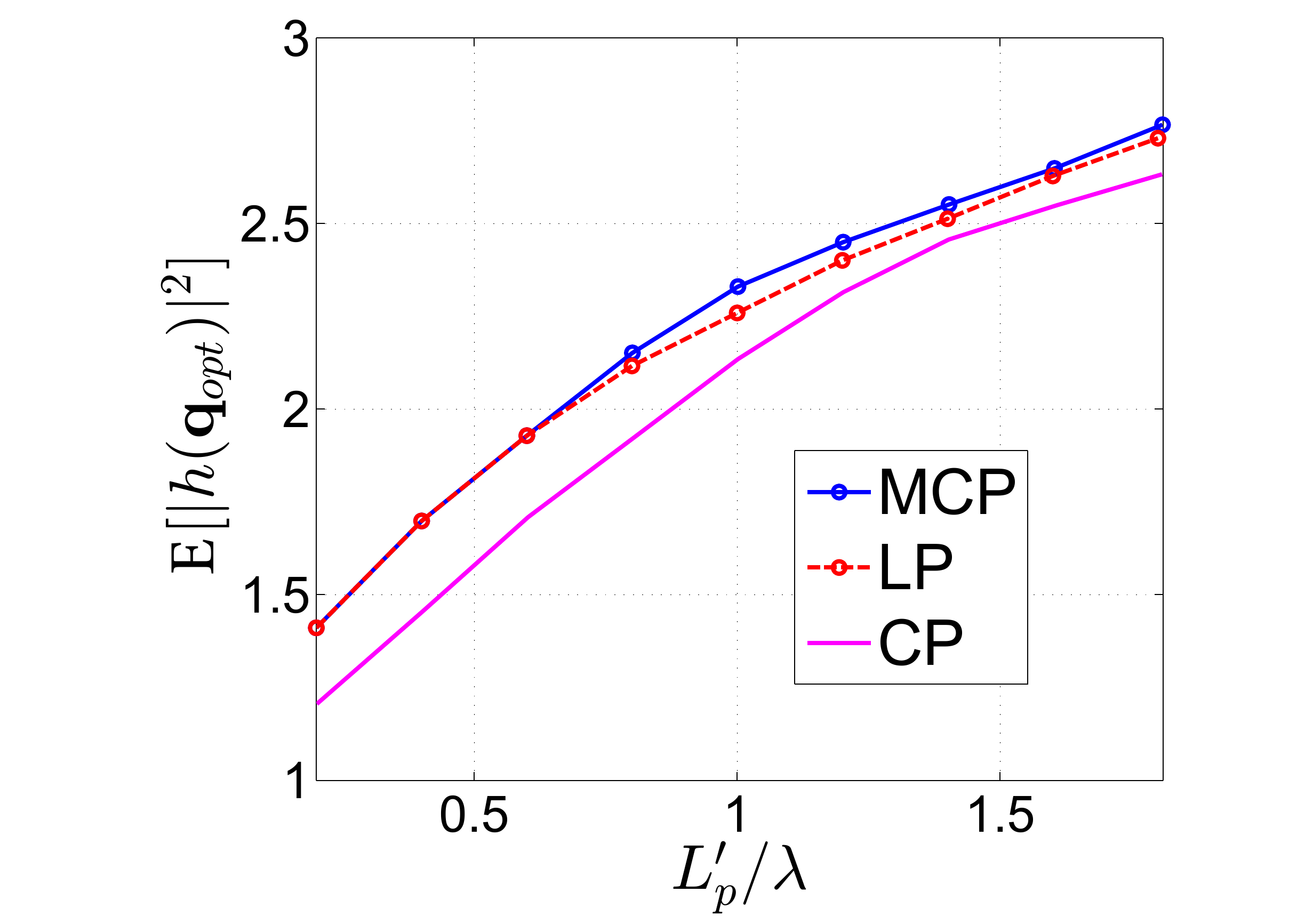}}}\hspace{-0.01cm}{{\includegraphics[clip, trim ={14mm 0mm 0mm 0mm},width=47mm ,height=48mm]{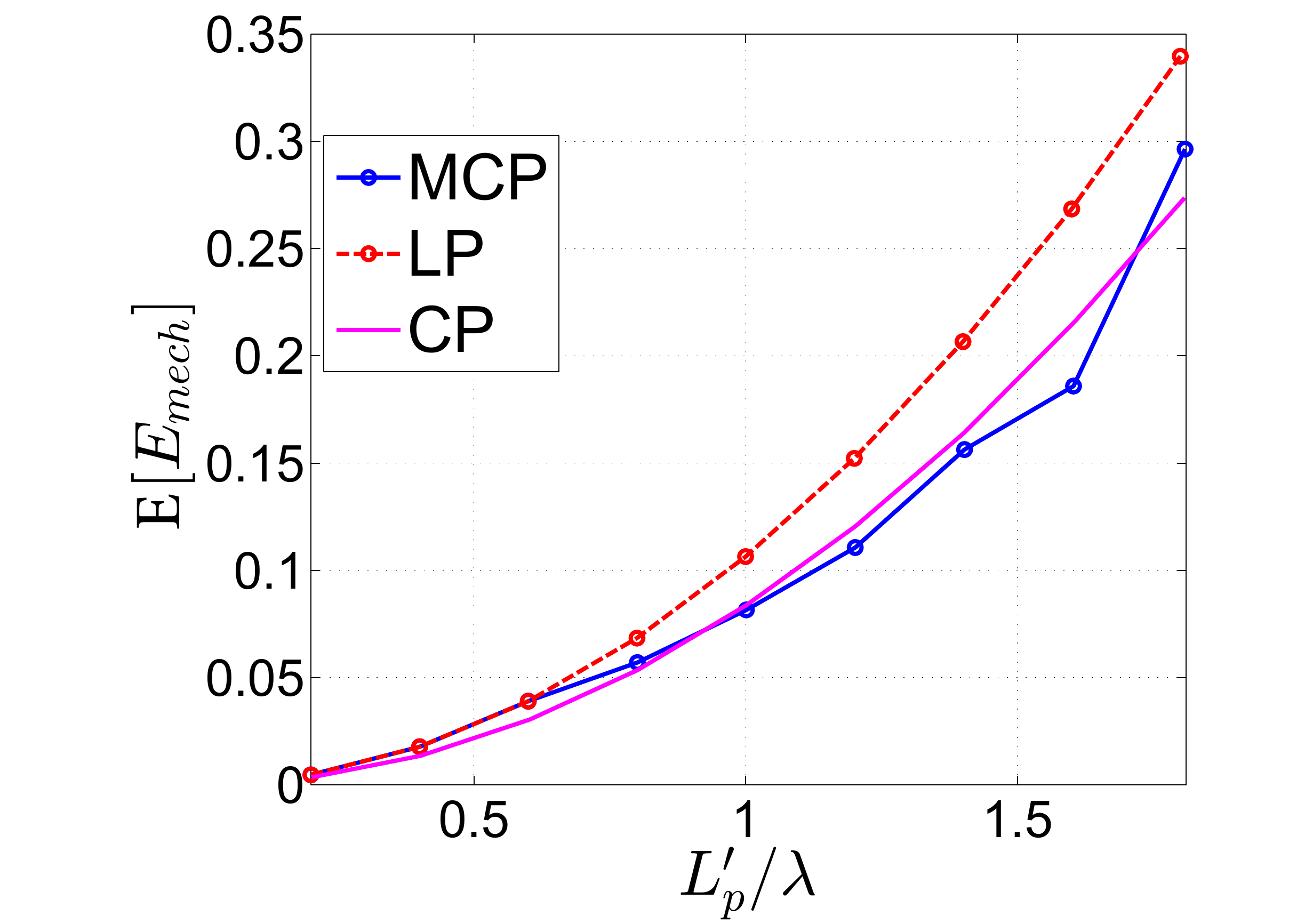}}}} }
\caption{(left) The $\mathbb{E}[|h(\mathbf{q}_{opt})|^2]$ quantity for different continuous paths and lengths under noiseless conditions for $\Delta=0.05\lambda$; (right) Mechanical energy consumption for different continuous paths and lengths under noiseless conditions for $\Delta=0.05\lambda$.}
 \label{Figure2}
\end{figure}
\begin{figure}[!htp]           
\centerline{{\includegraphics[clip, trim ={1mm 72mm 10mm 79mm},width=87mm ,height=77mm]{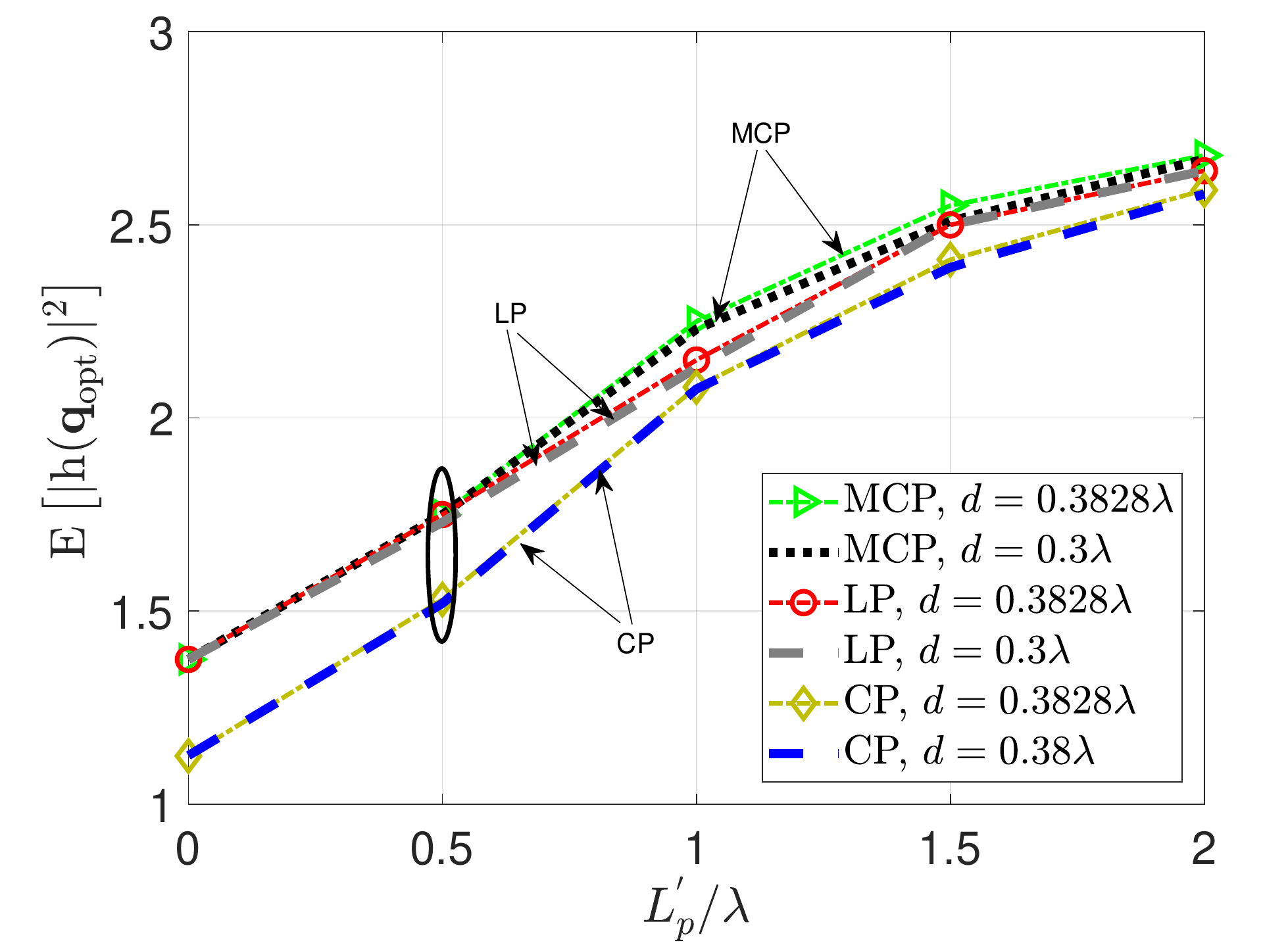}}}
    \caption[Global probability of detection for different P_t values]{\label{Figure4}
   $\mathbb{E}[|h(\mathbf{q}_{opt})|^2]$ for different continuous paths, lengths and values of $d$ with $SNR=10$dB and $\Delta=0.05\lambda$.}
\end{figure}
Nevertheless, when comparing the MCP with the CP, we observe (refer to Fig. 2) that both the CMDA using the MCP and the CMDA using the CP have, for practical purposes, almost the same performance in terms of mechanical energy. However, in terms of channel power obtained, the CMDA using the MCP has a significantly better performance compared to the CMDA using CP.

Clearly, these results show that the MCP has both the LP advantages in terms of channel power and the CP advantages in terms of mechanical energy. Hence, this enables the MDAs using the MCP to be more efficient compared to those that use either the LP or  CP.

Fig. \ref{Figure4} shows the channel power obtained by the CMDA for $SNR=10$ dB versus different values of the $d$ parameter in (\ref{eq:1.2.1}). Note that parameter $d$ determines the number of samples required for the channel estimation. First, we select $d=0.3828\lambda$. To estimate the channel at $\mathbf{p}(t_s(k))$, the MR use the samples at the sampling points that are closer to $\mathbf{p}(t_s(k))$ than the minimum distance at which the channel correlation is zero (i.e., $0.3828\lambda$). However, in the second case we select $d=0.3\lambda$ resulting in the MR using only measurements at sampling points whose channel has a correlation higher than $\approx 0.3$ with the channel to be estimated. This leads to a reduced number of samples used for each estimation and hence the computational load for the MR. However, the effect of the samples reduction on the channel power obtained by the execution of the CMDA is negligible as observed in Fig. \ref{Figure4}.

\begin{table}[ht]
\caption{MDMTA simulation results.} 
\begin{center}
\scalebox {0.96}[1]{
\begin{tabular}{|c|c||c|c|c|}
  \hline
Number of stopping points  & 3 & 4 & 5  \tabularnewline
\hline
$\mathbb{E}[E_{mech}]$   & 0.0549 & 0.1458 & 0.2753 \tabularnewline
\hline
$\mathbb{E}[|h(\mathbf{q}_{opt}^2)|]$  & 1.8301 & 2.0675 & 2.2501  \tabularnewline
\hline
\end{tabular}}
\end{center}
\label{tab:1} 
\end{table}

Finally, we compare the performance of the CMDA with the MDA parametrized on the number of stopping points. In Table \ref{tab:1}, we show the performance of the MDA based on the stopping points presented in \cite{r9} and using the optimum geometries obtained in that article. The algorithm parameters are selected in such a way that the channel power is maximized. Similarly, we  simulate the MDA with the wavelength, the MR and the execution time considered in here such that a fair comparison with the CMDA can be made. From Fig. \ref{Figure2} and Table \ref{tab:1}, we observe that for a given amount of mechanical energy used by the MR, the CMDA can provide higher channel power compared to  the MDMTA. However, the CMDA requires less MR mechanical energy in order to yield the same channel power. 

\section{Conclusions}
\label{sec:Conclusions}
In this paper, we have considered the problem of the shape optimisation of a continuous path for a given MDA and for a specified path length. The optimum path (MCP) is shown to maximize the channel power achievable via the CMDA given a fixed path length. This has been corroborated by simulations. When compared to the linear and the circular path, the MCP has shown to exhibit the advantages of both paths without their corresponding drawbacks. We have also demonstrated that for a small desired distance, the MCP takes the form of a linear path. Finally, we have shown that the CMDA with the MCP can be more efficient than MDAs using stopping points. Future work will explore and develop adaptive continuous paths enabling higher channel gains output.

\end{document}